\newcommand{\reals}{\mathbb{R}}
\newcommand{\R}{\reals}
\newcommand{\Rnonneg}{\reals_{\geq 0}}
\newcommand{\Rplus}{\reals_{>0}}
\renewcommand{\S}{\mathbb{S}}
\newcommand{\pd}{\S_{++}}
\newcommand{\psd}{\S_{+}}
\renewcommand{\det}[1]{\left|#1\right|}
\newcommand{\tr}[1]{\operatorname{tr}\left(#1\right)}
\newcommand{\Dcal}{\mathcal{D}}
\newcommand{\Lcal}{\mathcal{L}}
\newcommand{\Ncal}{\mathcal{N}}
\newcommand{\Scal}{\mathcal{S}}
\newcommand{\Ucal}{\mathcal{U}}
\newcommand{\Xcal}{\mathcal{X}}
\newcommand{\eqn}[1]{\begin{align} #1 \end{align}}
\newcommand{\eqnN}[1]{\begin{align*} #1 \end{align*}}
\newcommand{\bmat}[1]{\begin{bmatrix}#1\end{bmatrix}}
\newcommand{\norm}[1]{\left\Vert #1 \right \Vert}
\newcommand{\abs}[1]{\left | #1 \right |}
\newcommand{\argmax}[1]{\underset{#1}{\text{argmax}}}
\newcommand{\trace}{\operatorname{tr}}
\theoremstyle{plain}
\newtheorem{theorem}{Theorem}
\newtheorem{corollary}[theorem]{Corollary}
\theoremstyle{definition}
\newtheorem{definition}{Definition}
\newtheorem{remark}{Remark}
\newtheorem{property}{Property}
\theoremstyle{remark}
\newtheorem{example}{Example}
\begin{document}
\title{Sensor-based Planning and Control for Robotic Systems: \\Introducing Clarity and Perceivability}
\author{Devansh R. Agrawal, \IEEEmembership{Student, IEEE}, Dimitra Panagou, \IEEEmembership{Senior Member, IEEE}
\thanks{This work was supported by the National Science Foundation (NSF) under Grant 1942907. Both authors are at the University of Michigan, Ann Arbor, MI 48109 USA (e-mail: \{devansh, dpanagou\}@umich.edu). }
}


\maketitle

\begin{abstract}
We introduce an information measure, termed \emph{clarity}, motivated by information entropy, and show that it has intuitive properties relevant to dynamic coverage control and informative path planning. Clarity defines the quality of the information we have about a variable of interest in an environment on a scale of $[0, 1]$, and has useful properties for control and planning such as: (I) clarity lower bounds the expected estimation error of \emph{any} estimator, and (II) given noisy measurements, clarity monotonically approaches a level $q_\infty < 1$. We establish a connection between coverage controllers and information theory via clarity, suggesting a coverage model that is physically consistent with how information is acquired. Next, we define the notion of \emph{perceivability} of an environment under a given robotic (or more generally, sensing and control) system, i.e., whether the system has sufficient sensing and actuation capabilities to gather desired information. We show that perceivability relates to the reachability of an augmented system, and derive the corresponding Hamilton-Jacobi-Bellman equations to determine perceivability. In simulations, we demonstrate how clarity is a useful concept for planning trajectories, how perceivability can be determined using reachability analysis, and how a Control Barrier Function (CBF) based controller can dramatically reduce the computational burden. 
\end{abstract}


\section{Introduction}
\label{sec:introduction}
Robots are often deployed to explore unknown or unstructured environments, e.g., ocean gliders perform data collection for remote sensing, or aerial robots search for targets in a disaster response. In this paper, we establish two concepts: \emph{clarity} and \emph{perceivability}, to capture information acquisition and show how it can be used to design informative controllers. 



Information theory has long been used in robotic path planning. Informative Path Planning (IPP) seeks to design trajectories that maximize the `amount of information' collected subject to budgetary constraints such as total energy or total time~\cite{hollinger2013sampling}. `Information' is measured in many ways, including entropy/mutual information~\cite{cao2013multi, moon2022tigris}, Fisher Information~\cite{zhang2019beyond}, through the number of unexplored cells/frontiers~\cite{cai2009information, zhou2021fuel}, the area of Voronoi partitions~\cite{cortes2004coverage, jiang2015higher}, or Gaussian Processes~\cite{popovic2020informative, marchant2014bayesian}.  Various techniques to solve IPP have been proposed, including grid/graph-search techniques and sampling-based techniques~\cite{bry2011rapidly, moon2022tigris, cao2013multi, xiao2022nonmyopic}. While useful for trajectory generation, such methods cannot quantify whether information can be gathered in the first place.

The main objective in this letter is to answer the following two questions: Given a platform (e.g., a robot) with onboard sensors, and an environment in which information is to be collected, (1) Does the overall system have sufficient actuation and sensing capabilities to gather information in a specified time, and (2) what are optimal control strategies to collect the information?

To address these, we introduce the notion of \emph{clarity} as a measure of the quality of information possessed about a variable $m$. Clarity about a random variable $m$, denoted $q$, lies in $[0, 1]$, where $q=0$ corresponds to the case where $m$ is completely unknown, and $q=1$ to the case where $m$ is perfectly known in an idealized (noise-free) setting. 

As a first contribution, we show that if $m$ is estimated using a Kalman Filter, the rate of change of clarity has a similar structure and response to one assumed in dynamic coverage controllers~\cite{haydon2021dynamic, panagou2016distributed, bentz2019hybrid}. This establishes certain optimality properties for dynamic coverage control, rather than being viewed as a heuristic approach to exploration. From an information theoretic perspective, clarity of $m$ is injective wrt differential entropy $m$, but is bounded between $[0, 1]$ instead of $[-\infty, \infty]$, with dynamics that are well defined at both ends $q=0, 1$. This is computationally easier to handle, akin to the difference between reciprocal and zeroing CBFs~\cite{ames2016control}.

With the notion of clarity at hand, we then define \emph{perceivability}, which aims to capture the maximum clarity that can be achieved in a fixed time by a given sensing system (robot dynamics and sensory outputs). We show that perceivability can be determined using reachability analysis, i.e., using a Hamilton-Jacobi-Bellman (HJB) equation. This allows us to compute optimal controllers that maximize clarity. Simulation studies demonstrate the concepts of clarity and perceivability, and we demonstrate how CBFs can alleviate the computation burden in HJB based methods.



\subsubsection*{Notation} Let $\R, \Rnonneg, \Rplus$ be the set of reals, non-negative reals, and positive reals respectively. $\overline{\R} = \R \cup \{\pm \infty\}$. Let $\pd^n$, $\psd^n$ denote the set of symmetric positive-definite and symmetric positive-semidefinite matrices in $\R^{n\times n}$. The determinant and trace of a square matrix $P$ are denoted $\det{P}, \tr{P}$ respectively. Let $\Ucal(a, b)$ denote a uniform distribution on the interval $[a, b] \subset \R^n$. Let $\Ncal(\mu, \Sigma)$ denote a normal distribution with mean $\mu \in \R^n$ and covariance $\Sigma \in \pd^n$.

\section{Clarity}
\label{sec:clarity}

To aid the reader, we use a running example throughout the paper, inspired by an oceanographic survey mission: we wish to create a map of the ocean-surface temperature in a specified region. The temperature measurements are obtained by sensors onboard a surface vessel, or from thermal images on an aerial vehicle, both subject to ocean currents or winds. 

Since we need a suitable information metric, we propose clarity, defined next. 

\subsection{Definitions and Fundamental Properties}

\begin{definition}\cite[Ch. 8]{thomas2006elements}
$X$ is a \emph{continuous random variable} if its \emph{cumulative distribution} $F(x) = Pr(X \leq x)$ is continuous. The \emph{probability density function} is $f(x) = F'(x)$. The set where $f(x) > 0$ is the \emph{support set} of $X$. 
\end{definition}

\emph{Differential entropy} extends the notion of entropy for discrete random variables (defined by Shannon~\cite{shannon1948mathematical}) to continuous random variables:

\begin{definition}\cite[Ch. 8]{thomas2006elements}
The \emph{differential entropy} $h[X]$ of a continuous random variable $X$ with density $f(x)$ is 
\eqn{
h[X] = - \int_{S} f(x) \log f(x) dx \label{eqn:diff_entropy}
}
where $S$ is the support set of $X$.
\end{definition}

While differential entropy shares many of the same properties as discrete entropy~\cite[Sec. 2.1]{thomas2006elements}, there are some key differences. For example, while discrete entropy lies in $[0, \infty]$, differential entropy lies in $[-\infty, \infty]$, i.e., entropy can be negative.  We define \emph{clarity} as:
\begin{definition}
Let $X$ be a $n$-dimensional continuous random variable with differential entropy $h[X]$. The \emph{clarity} of $X$ is
\eqn{
q[X] = \left(1+\frac{\exp{(2 h[X])}}{(2 \pi e)^n}\right)^{-1}. \label{eqn:clarity}
}
\end{definition}
The normalizing factor $(2\pi e)^n$ is introduced to simplify some of the algebra, as demonstrated in the example:

\begin{example}
\label{ex:differential_entropy_and_clarity}
Consider $X \sim \Ucal(a, b)$, and $Y \sim \Ncal(\mu, P)$, where $a, b \in \R$, $\mu \in \R^n$, $P \in \psd^n$. The differential entropy and clarity of $X, Y$ are
\eqnN{
&h[X] = \log{(b-a)}, \quad
h[Y] = \log {\sqrt{(2 \pi e)^n \det{P}}}\\
&q[X] = \frac{1 }{1 + \frac{(b-a)^2}{2  \pi e} }, \quad
q[Y] = \frac{1}{1 + \det{P}}.
}
\end{example}

Next, we establish some fundamental properties of clarity. 

\begin{property}
For any $n$-dimensional continuous r.v. $X$, $A \in \R^{n \times n}$, and  $c \in \R^n$, 
\eqn{
&q[X]  \in [0, 1] && \text{(clarity is bounded)} \label{eqn:bounded}\\
&q[X + c] = q[X] && \text{(clarity is shift-invariant)} \label{eqn:shift_inv}\\
&q[AX] \neq q[X] && \text{(clarity is \emph{not} scale-invariant)} \label{eqn:scale_inv}
}
\end{property}
\begin{proof}
\emph{Of \eqref{eqn:bounded}:} Since $h[X] \in \overline{\R}$,  $q[X] = 1 / (1 + s)$ for some $s \in [0, \infty]$, i.e., $q[X] \in [0, 1]$. 

\emph{Of \eqref{eqn:shift_inv}, \eqref{eqn:scale_inv}:} Follows from \cite[Th. 8.6.3]{thomas2006elements} ($h[X + c] = h[X]$), and \cite[Th. 8.6.4]{thomas2006elements} ($h[A X] = h[X] + \log{\abs{a}}$).
\end{proof}

In information gathering tasks, we seek to design trajectories that minimize the estimation error: Let $X$ be a random variable of \emph{any} distribution with clarity $q[X]$. Let $\hat X$ be \emph{any} estimate of $X$, and $E[(X-\hat X)(X-\hat X)^T]$ be defined as the expected estimation error. In Theorem~\ref{thm:pred_error_mv}, and Cor.~\ref{thm:pred_error} we show that clarity bounds the expected estimation error:  a necessary condition for expected estimation error to approach 0 is that clarity must approach 1.

\begin{theorem}
\label{thm:pred_error_mv}
For any $n$-dimensional continuous random variable $X$ and any $\hat X \in \R^n$, 
    the determinant of the expected estimation error is lower-bounded as
    \eqn{
    \det{E[(X-\hat X)(X-\hat X)^T]} \geq \frac{1}{q[X]} - 1,
    }
    with equality if and only if $X$ is Gaussian and $\hat X = E[X]$. 
\end{theorem}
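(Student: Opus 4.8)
The plan is to reduce the clarity bound to the classical maximum-entropy characterization of the Gaussian. First I would rewrite the right-hand side purely in terms of entropy: directly from \eqref{eqn:clarity}, $\frac{1}{q[X]} - 1 = \frac{\exp(2h[X])}{(2\pi e)^n}$, so the claim is equivalent to showing $\det{E[(X-\hat X)(X-\hat X)^T]} \geq \exp(2h[X])/(2\pi e)^n$.

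Next I would separate the roles of the estimate $\hat X$ and the distribution of $X$ by centering at the mean $\mu = E[X]$. Writing $X - \hat X = (X-\mu) + (\mu - \hat X)$ and expanding, the cross terms vanish because $E[X-\mu] = 0$, giving $E[(X-\hat X)(X-\hat X)^T] = \Sigma + (\mu - \hat X)(\mu - \hat X)^T$, where $\Sigma$ is the covariance of $X$. Since $X$ admits a density on $\R^n$, its covariance $\Sigma$ is positive definite (otherwise $X$ would be supported on a hyperplane of Lebesgue measure zero), and I may assume the second moments are finite, as the bound is trivial otherwise. The rank-one update is then handled by the matrix determinant lemma: $\det{\Sigma + (\mu-\hat X)(\mu-\hat X)^T} = \det{\Sigma}\,(1 + (\mu-\hat X)^T\Sigma^{-1}(\mu - \hat X)) \geq \det{\Sigma}$, with equality if and only if $\hat X = \mu$.

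The remaining and essential ingredient is the maximum-entropy theorem \cite[Thm. 8.6.5]{thomas2006elements}, which states that among all $n$-dimensional continuous random variables with covariance $\Sigma$, the Gaussian maximizes differential entropy: $h[X] \leq \frac{1}{2}\log((2\pi e)^n \det{\Sigma})$, with equality if and only if $X \sim \Ncal(\mu, \Sigma)$. Exponentiating and rearranging yields $\det{\Sigma} \geq \exp(2h[X])/(2\pi e)^n$. Chaining the two inequalities gives the result, and equality holds throughout precisely when both conditions are met: $\hat X = \mu = E[X]$ (from the determinant lemma) and $X$ Gaussian (from the entropy bound), which is exactly the stated characterization.

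I expect the only real subtlety to be bookkeeping rather than a deep obstacle: confirming positive-definiteness of $\Sigma$ for a genuine continuous random variable (so that $\Sigma^{-1}$ exists and the determinant lemma applies), and cleanly disposing of the degenerate cases of infinite second moments. The analytic heart of the argument, namely that the Gaussian is the entropy maximizer at fixed covariance, is quoted rather than reproved, so the proof is essentially an assembly of the determinant lemma with that cited inequality.
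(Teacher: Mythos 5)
Your proof is correct and follows essentially the same route as the paper's: reduce the estimation error to the covariance by centering at $E[X]$ (a step the paper delegates to the cited argument of \cite[Th.~8.6.6]{thomas2006elements}, which you make explicit via the rank-one expansion and the matrix determinant lemma), and then invoke the Gaussian maximum-entropy theorem at fixed covariance. The only difference is bookkeeping: you spell out the equality conditions, the positive-definiteness of $\Sigma$, and the degenerate moment cases, all of which the paper leaves implicit.
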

\begin{proof}
Following the same arguments as in~\cite[Th. 8.6.6]{thomas2006elements},
\eqnN{
&\det{E[(X-\hat X)(X-\hat X)^T]}  \geq \min_{\hat X \in \R^n} \det{ E[(X - \hat X) ( X - \hat X)^T]}\\
& \quad = \det{ E[(X - E[X]) ( X - E[X])^T]} = \det{\operatorname{var}(X)}
}
and since a Guassian distribution has the greatest entropy of a given variance~\cite[Th. 8.6.6]{thomas2006elements}, 
\eqnN{
\det{E[(X-\hat X)(X-\hat X)^T]} \geq \frac{e^{2 h[X]}}{(2 \pi e)^n} = \frac{1}{q[X]} - 1.
}\end{proof}
\begin{corollary}
\label{thm:pred_error}
For any $1$-dimensional continuous random variable $x$ and any $\hat x \in \R$, the expected estimation error is lower bounded by 
    \eqn{
    E[(x-\hat x)^2] \geq \frac{1}{q[x]} - 1
    }
    with equality if and only if $x$ is Gaussian and $\hat x = E[x]$.
\end{corollary}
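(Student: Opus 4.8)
The plan is to obtain Corollary~\ref{thm:pred_error} as the scalar specialization of Theorem~\ref{thm:pred_error_mv}. First I would set $n = 1$ in the theorem, so that the random variable $X$ becomes the scalar $x \in \R$ and the estimate $\hat X$ becomes $\hat x \in \R$. The outer product $(X - \hat X)(X - \hat X)^T$ then collapses to the $1 \times 1$ matrix whose single entry is $(x - \hat x)^2$, and taking expectations yields the $1 \times 1$ matrix with entry $E[(x - \hat x)^2]$.

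The one identity to verify is that the determinant of a $1 \times 1$ matrix equals its unique entry, so that $\det{E[(x - \hat x)(x - \hat x)^T]} = E[(x - \hat x)^2]$. Substituting this into the theorem's bound evaluated at $n = 1$ immediately gives $E[(x - \hat x)^2] \geq 1/q[x] - 1$. The equality characterization is inherited verbatim: Theorem~\ref{thm:pred_error_mv} attains equality exactly when $x$ is Gaussian and $\hat x = E[x]$, and nothing in the reduction to $n = 1$ alters this condition.

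I do not expect a genuine obstacle, since the statement is a direct corollary; the only point requiring a line of care is confirming the scalar determinant identity. As a self-contained sanity check I would also reprove it directly without invoking the theorem: decompose $E[(x - \hat x)^2] = \operatorname{var}(x) + (E[x] - \hat x)^2 \geq \operatorname{var}(x)$, with equality iff $\hat x = E[x]$; then invoke the maximum-entropy property of the Gaussian at fixed variance to obtain $\operatorname{var}(x) \geq e^{2 h[x]}/(2 \pi e)$, which by the clarity definition \eqref{eqn:clarity} with $n = 1$ equals $1/q[x] - 1$. The two inequality chains are simultaneously tight precisely when $x$ is Gaussian with $\hat x = E[x]$, recovering the stated equality condition and confirming consistency with the specialization argument.
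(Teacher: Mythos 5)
Your proposal matches the paper's proof: the paper likewise derives the corollary directly from Theorem~\ref{thm:pred_error_mv} by noting that for a $1\times 1$ matrix $P \in \pd^1$ we have $\det{P} = P$, i.e., the determinant of the scalar expected estimation error is the error itself. Your additional direct sanity check (variance decomposition plus the Gaussian maximum-entropy bound) simply reprises the theorem's own proof in the scalar case, so it is consistent but not a genuinely different route.
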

\begin{proof}
    Use Thm.~\ref{thm:pred_error_mv} with $P \in \pd^1 \implies \det{P}=P$.
\end{proof}

\subsection{Clarity and Coverage Control}
Next, we demonstrate the connection between clarity and coverage control. Consider the system 
\eqn{
\dot x = f(x, u) \label{eqn:xdot}
}
where $x \in \Xcal \subset \R^n$ is the system sate, $u \in \Ucal \subset \R^m$ is the control input, and $f: \Xcal \times \Ucal \to \R^m$ defines the dynamics. 

The problem in coverage control is to design a controller $\pi : \Xcal \to \Ucal$ for the system~\eqref{eqn:xdot} such that closed-loop trajectories gather information over a domain $\Dcal \subset \Xcal$. As in~\cite{haydon2021dynamic}, let $c = c(t, p)$ denote the `coverage level' about a point $p \in \Dcal$ at time $t$. \cite{haydon2021dynamic} assumes the coverage level increases through a sensing function $S: \Xcal \times \Dcal \to \R$ (positive when $p$ can be sensed from $x$, and 0 else), and coverage decreases at a rate $\alpha: \Dcal \to \Rnonneg$. This results in the model
\eqn{
\dot c = S(x, p) (1 - c) - \alpha(p) c. \label{eqn:coverage}
}
In~\cite{panagou2016distributed, bentz2019hybrid} the $\alpha$ term is ignored, and a point $p$ is said to be `covered' if $c(t, p)$ reaches a threshold $c^*$. 

However, given specifications on the robot, sensors, and the environment, it is not clear how to systematically define $S, \alpha, c^*$. \cite{bentz2019hybrid, haydon2021dynamic} resort to heuristic methods. 

In many practical scenarios, measurements are assimilated using a Kalman Filter. In principle, the coverage dynamics should reflect the information gathering mechanism, i.e., the quality of information as the environment is estimated using the Kalman Filter. In deriving the clarity dynamics, final result in~\eqref{eqn:clarity_dynamics}, we will notice a similarities with~\eqref{eqn:coverage}.


Consider the simplest scenario, where we want to estimate a scalar variable $m$. We assume $m$ is a stochastic process:
\eqn{
\dot m &= w(t), && w(t) \sim \Ncal(0, Q), \label{eqn:mdot}\\
y  &= C(x) m + v(t), && v(t) \sim \Ncal(0, R(x)), \label{eqn:y}
}
where $m \in \R$ is the quantity of interest. Given the robot is at a state $x$, the (scalar) measurement obtained is $y \in \R$, and can be perturbed by some measurement noise $v(t)\sim \Ncal(0, R(x))$. Notice that $C(x), R(x)$ are state dependent, emphasizing that the quality of the measurements of $m$ can depend on the robot's state $x$. For simplicity, assume the state $x$ is known exactly. The following demonstrates the setup:
\begin{example}
Let $x$ be the quadrotor's state, with position $x_{pos} \in \R^2$ and altitude $x_{alt}$. The quadrotor uses a downward facing thermal camera with half-cone angle $\theta$ to measure the ocean's  temperature $m$ at a location $p$. Then $C(x)$ is 
\eqnN{
C(x) = \begin{cases}
    1, & \text{if }\norm{x_{pos} - p} \leq x_{alt} \tan{\theta},\\
    0, & \text{else }
\end{cases}
}
and (if the measurement variance is state-independent),  $R(x) = R_0$ for some known $R_0$. The fact that the ocean temperate can change stochastically is reflected by~\eqref{eqn:mdot}.
\end{example}

Notice that the subsystem \eqref{eqn:mdot}, \eqref{eqn:y} satisfies the assumptions of linear-time varying Kalman Filters~\cite[Ch. 4]{gelb1974applied}, since for any given trajectory $x(t)$, the measurement model is equivalent to $y = C(t) m + v(t)$, where $C(t) = C(x(t))$ by slight abuse of notation. Therefore, the estimate has distribution $\Ncal(\mu, P)$, where $\mu, P$ evolve according to:
\eqnN{
\dot {\mu} = P C(x) R(x)^{-1} ( y - C(x) \mu), \quad \dot P = Q - \frac{C(x)^2}{R(x)} P^2.
}
where $P \in \Rplus$ is the variance of the estimate. Since the clarity of a scalar Gaussian distribution is $q = 1/(1+P)$, 
\eqnN{
\dot q &= \frac{\partial q}{\partial P} \dot P = \frac{-\dot P}{(1+P)^2} = \frac{-1}{(1+P)^2} \left(Q - \frac{C(x)^2}{R(x)} P^2\right)\nonumber
}
and therefore the clarity dynamics are
\eqn{
\dot q = \frac{C(x)^2}{R(x)} (1 - q)^2 - Q q^2. \label{eqn:clarity_dynamics}
}

\begin{remark}
    Comparing~\eqref{eqn:coverage} with~\eqref{eqn:clarity_dynamics}, one may note that their structure is remarkably similar. Clarity and coverage increase due to the first term, and decrease due to the second. However, \eqref{eqn:clarity_dynamics} is nonlinear wrt $q$. Thus, although~\eqref{eqn:coverage} has the right intuitive characteristics to describe `coverage', \emph{\eqref{eqn:clarity_dynamics} has the correct dynamics corresponding to information gathering}, i.e., the rate of improvement of the estimate.  
\end{remark}

Equation~\eqref{eqn:clarity_dynamics} yields further insight. Clarity decays at a rate $-Q q^2$, i.e., related to the stochasticity of the environment. Furthermore, the incremental value of a measurement decreases as the clarity increases: $C(x)^2 (1-q)^2/R(x)$ decreases as $q$ increases. In other words, although every additional measurement increases clarity, there are diminishing returns, quantified by~\eqref{eqn:clarity_dynamics}.

Although nonlinear, \eqref{eqn:clarity_dynamics} has closed-form solutions, since it is an instance of a (scalar) differential Riccati equation~\cite[Sec. 2.15]{ince1927ordinary}. For constant $C(x) = C, R(x) = R$, if $C, R, Q > 0$, the solution is 
\eqn{
    q(t) = q_\infty \left( 1 + \frac{2\gamma_1}{\gamma_2 + \gamma_3 e^{2 k Q t}} \right), \label{eqn:closed_form_clarity}
    }
where $k = C/\sqrt{Q R}$, $q_\infty = k/(k+1)$, $\gamma_1 = q_\infty - q_0$, $\gamma_2 = \gamma_1 (k - 1)$, $\gamma_3 = (k-1) q_0 - k$.
    
As $t \to \infty$,  clarity monotonically approaches $q_\infty < 1$ for $Q, R \neq 0$. Thus if $m$ is a stochastic process with non-zero variance, and the measurements have non-zero variance, perfect clarity ($q=1$) cannot be attained. 

The vector case also has the same structure:

\begin{theorem}
    Let $m \in \R^{n_m}$ be the environment state vector, and $y \in \R^q$ be the sensed outputs. Suppose the environment and measurement models are
    \eqn{
    \dot m &= A m + w(t) && w(t) \sim \Ncal(0, Q)\\
    y &= C(x) m + v(t) && v(t) \sim \Ncal(0, R(x))
    }
    with $Q \in \pd^{n_m}$, and $R : \Xcal \to \pd^{q}$. Assuming $P(t) \in \pd^{n_m}$ for all $t$,\footnote{This is a standard assumption in Kalman filtering, amounting to an assumption on the observability of $m$. See~\cite[Sec. 11.2]{khalil2015nonlinear} for details.} and a prior $m \sim \Ncal(\mu, P)$, then 
    \eqn{
    \dot P &= AP + PA^T+ Q - PC(x)^T R(x)^{-1} C(x) P\label{eqn:dotP}\\
    \dot q &= q(1-q) (\trace{(C(x)^T R^{-1} C(x) P)} - \trace{(2A + P^{-1} Q)} ). \label{eqn:dotq_mv}
    }
    
\end{theorem}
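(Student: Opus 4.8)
The plan is to treat the two equations separately. Equation~\eqref{eqn:dotP} is the standard continuous-time Kalman--Bucy covariance Riccati equation for the linear time-varying system $\dot m = Am + w$, $y = C(x)m + v$: along any fixed trajectory $x(t)$ the matrices $C(t) := C(x(t))$ and $R(t) := R(x(t))$ are deterministic, so the hypotheses of~\cite[Ch. 4]{gelb1974applied} apply and~\eqref{eqn:dotP} follows directly. The standing assumption $P(t) \in \pd^{n_m}$ ensures $P^{-1}$ exists for all $t$, which is needed throughout the computation of $\dot q$.

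For the clarity dynamics~\eqref{eqn:dotq_mv}, I would begin from the closed-form clarity of a Gaussian established in Example~\ref{ex:differential_entropy_and_clarity}, namely $q = 1/(1 + \det{P})$, which applies since the Kalman filter estimate is Gaussian with covariance $P$. Viewing $q$ as a function of $\det{P}$ and applying the chain rule gives $\dot q = -(1+\det{P})^{-2}\,\tfrac{d}{dt}\det{P}$. The key step is differentiating the determinant via Jacobi's formula, $\tfrac{d}{dt}\det{P} = \det{P}\,\trace{(P^{-1}\dot P)}$. Substituting this and noting that $q(1-q) = \det{P}/(1+\det{P})^2$, the expression collapses to the compact form $\dot q = -q(1-q)\,\trace{(P^{-1}\dot P)}$, isolating all the dynamics into a single trace.

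It then remains to substitute~\eqref{eqn:dotP} and simplify. Writing $C = C(x)$, $R = R(x)$ and expanding, $P^{-1}\dot P = P^{-1}AP + A^T + P^{-1}Q - C^T R^{-1} C P$, so the trace splits into four terms. Using the cyclic invariance of the trace gives $\trace{(P^{-1}AP)} = \trace{(A)}$, and since $\trace{(A^T)} = \trace{(A)}$ these combine into $\trace{(2A)}$; the remaining two terms are already in the target form. Collecting terms and factoring out the sign yields exactly~\eqref{eqn:dotq_mv}. I expect the main obstacle to be bookkeeping rather than any conceptual difficulty: the argument is a short chain of matrix-calculus identities (Jacobi's formula and trace invariance), and the chief risk is a sign error or a misapplied cyclic permutation, since $A$, $P$, and $Q$ need not commute. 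It is worth flagging that the invertibility hypothesis $P(t) \in \pd^{n_m}$ is precisely what makes Jacobi's formula and the $P^{-1}Q$ term well-defined, so this is where that assumption enters.
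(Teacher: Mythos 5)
Your proposal is correct and follows essentially the same route as the paper's proof: cite the standard Kalman--Bucy Riccati equation for \eqref{eqn:dotP}, then use $q = 1/(1+\det{P})$, the chain rule, and Jacobi's formula to obtain $\dot q = -q(1-q)\tr{P^{-1}\dot P}$, and substitute \eqref{eqn:dotP}. The only difference is that you carry out the final trace bookkeeping (cyclic invariance giving $\tr{P^{-1}AP} = \tr{A}$, etc.) explicitly, which the paper compresses into ``substituting and simplifying.''
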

\begin{proof}
Eq.~\eqref{eqn:dotP} is the standard covariance update for the Kalman Filter. To derive~\eqref{eqn:dotq_mv}, notice the clarity of a multivariate Gaussian is $q= 1/(1 + \det{P}) $. Therefore, 
\eqnN{
\dot q = -\frac{1}{(1 + \det{P})^2} \frac{d}{dt} (\det{P})
}
Since $P \in \pd^{n_m}$, it is is invertible. Using Jacobi's formula:
\eqnN{
\dot q &= \frac{-\det{P} \trace{( P^{-1} \dot P)} }{(1 + \det{P})^2} = q(1-q) \trace{(-P^{-1} \dot P)}
}
since $\det{P}/(1 + \det{P})^2 = q(1-q)$. Substituting in~\eqref{eqn:dotP}, and simplifying, we arrive at~\eqref{eqn:dotq_mv}.
\end{proof}

Again, we see the same structure: when $C(x) \neq 0$, the clarity increases at a rate proportional to $\trace{(C(x)^T R(x)^{-1} C(x) P)}$, and decreases at a rate proportional to $\trace{(P^{-1} Q)}$. Furthermore, since clarity dynamics are independent of $y$,  for trajectory planning purposes we can consider the deterministic (and fully known) system
\eqn{
\dot X = \tilde f(X, u), \quad 
\dot q = g(X, q)
}
where $X = [x^T, \operatorname{vec}(P)^T]^T$ is an extended state.

\section{Perceivability}

In this section, we introduce the concept of \emph{perceivability}, which measures the following: given a robot with certain sensing and actuation capabilities, can the robot's motion over a finite time achieve a desired level of clarity with the collected sensory data? 
Formally,
\begin{definition}
    A quantity $m \in \R$ that evolves according to~\eqref{eqn:mdot} is \emph{perceivable} by the system~(\ref{eqn:xdot}, \ref{eqn:y}) with clarity dynamics\footnote{When using a Kalman Filter to estimate $m$, $g$ is as in~\eqref{eqn:clarity_dynamics}. In general, other estimators could be used, and will lead to different expressions for $g$. } $g: \Xcal \times [0, 1] \to \R$, to a level $q^* \in [0, 1]$ at time $T$ from an initial state $x_0 \in \Xcal$ and clarity $q_0 \in [0,1]$, if there exists a controller $\pi: [0, T] \to \Ucal$ s.t. the solution to 
    \eqn{
    \bmat{\dot x\\\dot q} = \bmat{f(x, \pi(t)) \\ g(x, q)}, \quad \bmat{x(0)\\q(0)} = \bmat{x_0\\ q_0}\label{eqn:perc}
    }
    satisfies $q(T) \geq q^*$. 
\end{definition}

We define the set of initial conditions from which $m$ is perceivable as the \emph{perceivability domain}:

\begin{definition}
    The $(q^*, T)$-\emph{Perceivability Domain} of a quantity $m \in \R$ (that evolves according to~\eqref{eqn:mdot}) by the system (\ref{eqn:xdot}, \ref{eqn:y})  is the set of initial states $x_0$ and initial clarities $q_0$ such that $m$ is perceivable to a level $q^*$ at time $T$:
    \eqn{
    \Dcal(q^*, T) = \Big \{& (x_0, q_0) : \exists \pi: [0, T] \to \Ucal, \notag\\
    & \quad \dot x = f(x, \pi(t)), \;  \dot q = g(x, q), \notag \\
    & \quad x(0) = x_0, \; q(0) = q_0, \; q(T) \geq q^* \Big\}.
    }
\end{definition}

Our key insight is that perceivability is fundamentally a question of the reachability of the augmented system~\eqref{eqn:perc}.  As with backward reachable sets, the perceivability domain can be defined by a Hamilton-Jacobi (HJB) equation:
\begin{theorem}
Let $V: [0, T] \times \Xcal \times [0, 1] \to \R$ solve
    \eqn{
    &\frac{\partial V}{\partial t} + \max_{u \in \Ucal} \left(\frac{\partial V}{\partial x}f(x, u) \right) + \frac{\partial V}{\partial q} g(x, q) = 0, \label{eqn:HJB}\\
    &V(T, x, q) = q \quad \forall x \in \Xcal, q \in [0, 1]. \label{eqn:HJB_bc}
    }
    Then the $(q^*, T)$- perceivability domain of $m \in \R$ (that evolves according to~\eqref{eqn:mdot}) by the system (\ref{eqn:xdot}, \ref{eqn:y})  is
    \eqn{
    \Dcal(q^*, T) = \left\{ [x_0^T, q]^T : V(0, x_0, q_0) \geq q^* \right\}.
    }
\end{theorem}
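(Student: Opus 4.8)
The plan is to recognize that $V(0, x_0, q_0)$ is precisely the value function of an optimal control problem whose objective is to maximize the terminal clarity $q(T)$, and then to read off the perceivability domain as the superlevel set $\{V(0, \cdot, \cdot) \geq q^*\}$. Since the theorem supplies $V$ as a solution of \eqref{eqn:HJB}--\eqref{eqn:HJB_bc}, I would run a verification argument: show that any sufficiently regular solution $V$ equals the maximal achievable terminal clarity starting from $(t, x, q)$, and then invoke the terminal condition to connect $V(0, \cdot, \cdot)$ to the definition of $\Dcal(q^*, T)$.

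First I would fix an arbitrary admissible controller $\pi: [0,T] \to \Ucal$ and let $(x(\cdot), q(\cdot))$ be the corresponding trajectory of \eqref{eqn:perc} with $x(0) = x_0$, $q(0) = q_0$. Differentiating $V$ along this trajectory by the chain rule gives
\[
\frac{d}{dt}V(t, x(t), q(t)) = \frac{\partial V}{\partial t} + \frac{\partial V}{\partial x}f(x, \pi(t)) + \frac{\partial V}{\partial q}g(x, q).
\]
Crucially, the clarity dynamics $g(x,q)$ do not depend on $u$ (the control affects clarity only indirectly, through the state $x$), which is exactly why the maximization in \eqref{eqn:HJB} acts only on the $\frac{\partial V}{\partial x}f$ term. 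Consequently the HJB bounds the right-hand side above by zero for every $\pi$, with equality when $\pi(t)$ attains the pointwise maximizer of $\frac{\partial V}{\partial x}f(x, u)$. Integrating from $0$ to $T$ and using the terminal condition $V(T, x(T), q(T)) = q(T)$ yields $V(0, x_0, q_0) \geq q(T)$ for every admissible $\pi$, and $V(0, x_0, q_0) = q(T)$ for the controller selecting the maximizer. Hence $V(0, x_0, q_0) = \sup_\pi q(T)$, the greatest terminal clarity reachable from $(x_0, q_0)$.

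With this identification the set characterization follows immediately: $(x_0, q_0) \in \Dcal(q^*, T)$ iff some controller drives $q(T) \geq q^*$, iff $\sup_\pi q(T) \geq q^*$, iff $V(0, x_0, q_0) \geq q^*$, which is the claimed description of $\Dcal(q^*, T)$.

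I expect the two genuine obstacles to be standard-but-delicate points from optimal control. The first is regularity: the chain-rule step presumes $V \in C^1$, whereas value functions of reachability problems are generically only Lipschitz, so a fully rigorous treatment requires interpreting \eqref{eqn:HJB} in the viscosity sense and invoking a comparison principle to identify the unique viscosity solution with the value function. The second is attainment of the supremum: if $\sup_\pi q(T)$ is not achieved, a point with $V(0, x_0, q_0) = q^*$ could sit on the boundary of $\Dcal$ without any $\pi$ actually reaching $q(T) \geq q^*$, breaking the equivalence on the boundary. Closing this gap requires a compactness hypothesis---typically $\Ucal$ compact and $f$ continuous, with convexity of the velocity set in the spirit of Filippov---to guarantee existence of an optimal control and hence that the superlevel set is both closed and exact.
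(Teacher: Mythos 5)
Your proof is correct (up to the standard regularity and attainment caveats you flag yourself), but it runs in the opposite logical direction from the paper's. The paper \emph{defines} $V(t,x,q)$ as the value function $\max_{\pi} q(T)$ subject to \eqref{eqn:perc}, applies the dynamic programming principle over $[t, t+\delta]$, and Taylor-expands as $\delta \to 0$ to show that this value function satisfies \eqref{eqn:HJB}; the set characterization is then immediate from the definition. You instead take the theorem's hypothesis at its word --- $V$ is a \emph{given} solution of \eqref{eqn:HJB}--\eqref{eqn:HJB_bc} --- and run a verification argument: differentiate $V$ along an arbitrary admissible trajectory, use the HJB to bound $\frac{d}{dt} V(t,x(t),q(t)) \leq 0$ (correctly observing that $g$ is control-independent, so the maximization acts only on the $\frac{\partial V}{\partial x} f$ term), integrate against the terminal condition to obtain $V(0,x_0,q_0) \geq q(T)$ for every $\pi$, with equality for the maximizing selection, and conclude $V(0,\cdot,\cdot) = \sup_\pi q(T)$. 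Your direction is in fact the one the statement logically requires: the paper's derivation shows the value function is \emph{a} solution of the HJB, but not that an arbitrary solution coincides with the value function --- that converse needs either your verification step or a uniqueness/comparison principle (in the viscosity sense, as you note, since value functions of such problems are generically only Lipschitz). So your route buys rigor exactly where the paper is informal, at the cost of the $C^1$ and measurable-selection hypotheses you identify. Your boundary caveat is also genuine and applies to the theorem as stated: since $\Dcal(q^*, T)$ requires an admissible $\pi$ that actually achieves $q(T) \geq q^*$, a point with $V(0,x_0,q_0) = q^*$ lies in the claimed superlevel set but belongs to $\Dcal(q^*,T)$ only if the supremum is attained; the compactness and convexity hypotheses you mention (compact $\Ucal$, continuous $f$, Filippov-type conditions, or relaxed controls) are what close this gap, and the paper silently assumes it.
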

\begin{proof}
Let $\Lcal([t, T], \Ucal)$ be the set of piecewise continuous functions $\pi:[t, T] \to \Ucal$. Define $V$ as the maximum clarity reachable from $(t, x, q)$:
\eqnN{
V(t, x(t), q(t)) = &\max_{\pi \in \Lcal([t, T], \Ucal)}  q(T) \; \text{s.t. } \eqref{eqn:perc}
}
By the principle of dynamic programming, for any $\delta > 0$,
\eqnN{
V(t, x(t), q(t)) = &\max_{\pi \in \Lcal([t, t+\delta], \Ucal)} V(t+\delta, x(t + \delta), q(t+\delta))
}
Using a Taylor expansion about $\delta = 0$, as $\delta \to 0$, 
\eqnN{
V(t, x(t), q(t)) &= \max_{u \in \Ucal} \Big( V(t, x(t), q(t)) + \frac{\partial V}{\partial t} \delta \notag \\ 
&\quad + \frac{\partial V}{\partial x} f(x, u) \delta  + \frac{\partial V}{\partial q} g(x, q)\delta \Big)
}
which simplifies to \eqref{eqn:HJB}.
\end{proof}

As with standard reachability theory, once the HJB equation~(\ref{eqn:HJB}, \ref{eqn:HJB_bc}) has been solved, the optimal controller is
\eqn{
\pi(t, x, q) = \argmax{u \in \Ucal} \left(\frac{\partial V}{\partial x} f(x, u) \right) \label{eqn:hjb_u}
}



\section{Simulations and Applications}
\subsection{Energy-Aware Information Gathering}
\footnote{All code and videos are available at~\cite{clarityRepo}.} This example demonstrates that the incremental value from measurements decreases as clarity increases. 
Consider the quadrotor tasked with measuring the ocean temperature. It must fly to a target location, spend $T$ seconds collecting information, and fly back to the start. As $T$ increases, more measurements are made and hence greater clarity is achieved, but at the cost of additional energy use. We wish to determine optimal $T$ to maximize clarity and minimize energy. We model the energy cost as $E(t) = p_0 + p_1 T$, where $p_0$ is the energy cost of flying to and back from the target, $p_1$ is the power draw at hover. The clarity dynamics are as in~\eqref{eqn:clarity_dynamics}.

The pareto front of $q(T)$ against $E(T)$ is depicted in Fig.~\ref{fig:clarity_energy}.  The diminishing value of measurements is clearly visible, as between $T \in [160, 320]$~s, the clarity only increases by 2.6\%, but increases by 49.7\% for $t \in [10, 20]$~s. To maximize the clarity/energy ratio, the quadrotor should collect measurements for $T^* =57.4$~seconds (green tangent).

\begin{figure}
    \centering
    \includegraphics[width=0.9\linewidth]{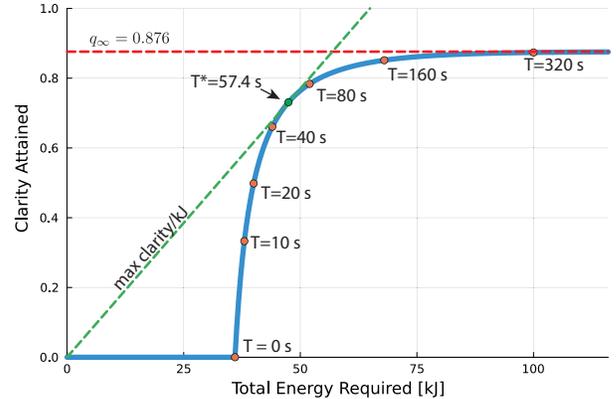}
    \caption{Clarity gained as a function of  the measurement time. First, the clarity increases rapidly (between $T=30$ to 40~s). As the level of clarity approaches $q_\infty$ (red dashed line), the rate of clarity accumulation decreases. The maximum clarity/energy ratio is (green dashed line) is achieved at $T^* = 57.4$~s. Parameters: $R = 20.0$, $Q = 0.001$, $p_0 = 36$~kJ, $p_1 = 0.2$~kW. }
    \label{fig:clarity_energy}
    \vspace{-3mm}
\end{figure}



\subsection{Coverage Control based on Clarity}

\begin{figure*}[t]
    \centering
    \includegraphics[width=\linewidth]{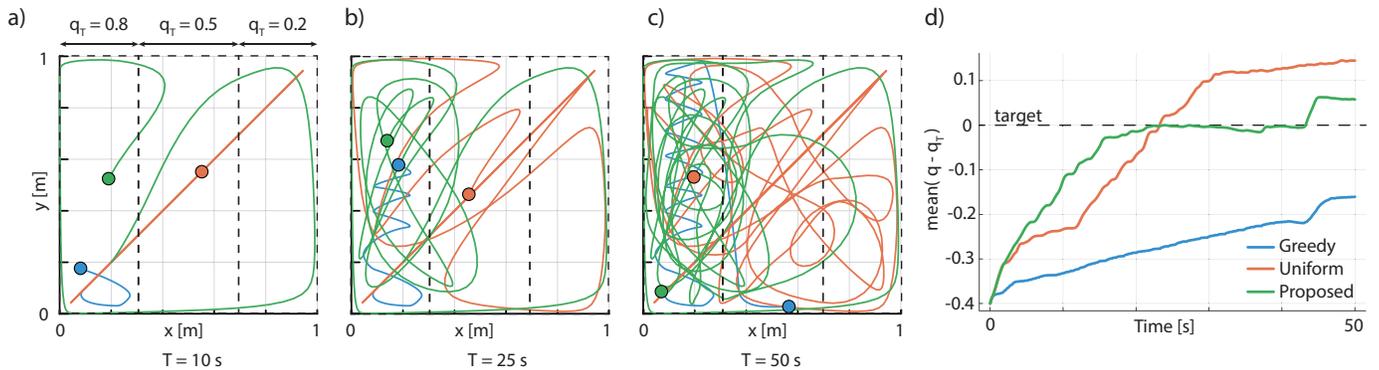}
    \caption{Coverage Controllers. (a-c) Snapshots of three controllers exploring a square region. The target clarity $q_T(p)$ is different in different regions as labelled in (a). (d) Plot of the mean($q(t, p) - q_T(p)$) against $t$ for each controller. Notice that using the proposed method, the mean clarity error is close to 0 for $t \in [20-35]$ seconds, and only increases later, when the entire region has higher clarity than the targets specified. }
    \label{fig:ergodic}
    \vspace{-3mm}
\end{figure*}

Next, we demonstrate how clarity can be used in ergodic coverage controller of~\cite{mathew2011metrics}. The robot is exploring a unit square, but certain regions have a greater target clarity than others, as labelled in Fig.~\ref{fig:ergodic}a. The challenge with ergodic controllers is defining the fraction of time spent at each position $p$, and uniform allocation is used as a heuristic. Since the target clarity has been specified, we can invert \eqref{eqn:closed_form_clarity} to determine the appropriate time allocation.

In Fig.~\ref{fig:ergodic}, we compare the behaviour of three coverage controllers: (A) a greedy controller drives to a point with maximum $(q_T(p) - q(t, p))$ and hovers at $p$ until $q_T$ is reached, (B) the ergodic controller in \cite{mathew2011metrics} with a uniform target distribution, and (C) the same ergodic controller but with a target distribution based on clarity. The proposed method brings the mean of ($q(t, p) - q_T(p)$) to 0 rapidly, and does not overshoot like controller B. Beyond $t=35$, most cells have reached the target clarity, and since the robot continues to explore, $q(t, p)$ increases further.

\subsection{Perceivability and Optimal Trajectory Generation}

\begin{figure*}[t]
    \centering
    \includegraphics[width=\linewidth]{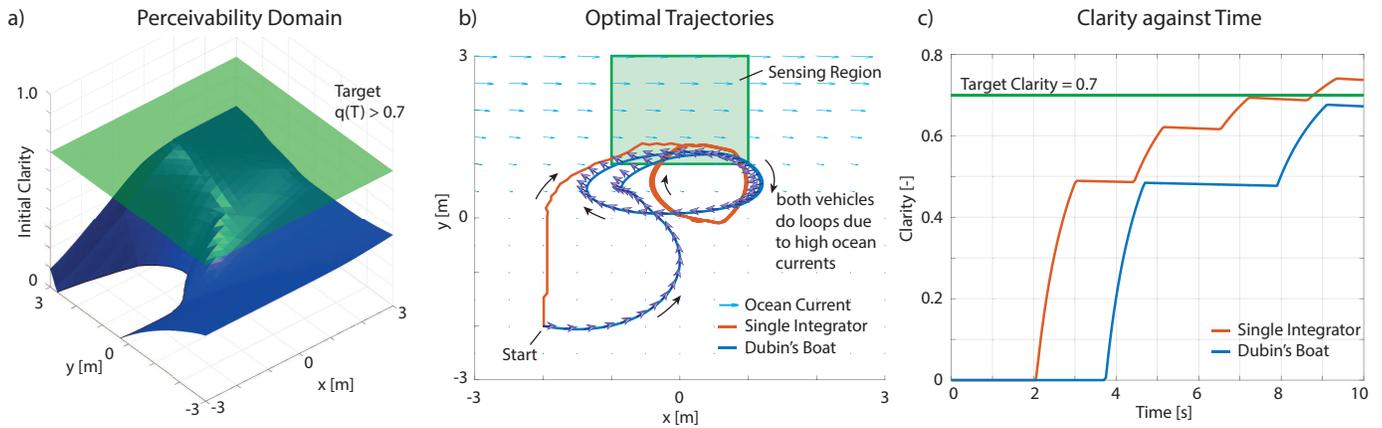}
    \caption{Perceivability and Optimal Trajectories. (a) The $(q^*, T)$-Perceivability Domain (states above blue surface) for single integrator using $q^* = 0.7, T=10.0$~sec. (b) Optimal trajectories for the single integrator (orange) and the Dubins boat (blue) from the same initial conditions. The heading of Dubins boat is shown with blue arrows. Due to the high ocean currents in the sensing region, both vehicles make multiple passes through the sensing region to accumulate clarity. (c) Plot of clarity against time for both vehicles. Since the single integrator is more maneuverable than the Dubins boat, the environment is perceivable to a level 0.7 in time 10~seconds for the single integrator but not for the Dubins boat.}
    \vspace{-3mm}
    \label{fig:hjb}
\end{figure*}

Here we demonstrate how the perceivability can be determined using~\eqref{eqn:HJB}, \eqref{eqn:HJB_bc}. Consider a boat tasked with collecting information that can only be measured from a specified region (green square in Fig.~\ref{fig:hjb}b). To highlight the importance of actuation capabilities on perceivability, we consider two models, a single integrator:
\eqnN{
\dot x_1 = u_1 + w_x(x), \; \dot x_2 = u_2 + w_y(x)
}
with $u_1, u_2 \in [-2, 2]$~m/s, and a Dubins Boat:
\eqnN{
\dot x_1 = v \cos x_3 + w_x(x),\; \dot x_2 = v \sin x_3 + w_y(x), \; \dot x_3 = u
}
where $v = 2$~m/s, and $u \in [-1,1]$~rad/s. For both the ocean current is $w_x(x) = \max(0, 3x_2), w_y(x) = -0.5$~m/s. Thus, neither vehicle has sufficient control authority to remain within the sensing range. For both vehicles the sensing model is as in~\eqref{eqn:clarity_dynamics}, with $C(x) = 1$ when $x$ is in the green square and $C(x) = 0$ elsewhere, $R(x) = 1.0$, $Q = 0.001$.  

To determine the perceivability domain, the backwards reachability set of~\eqref{eqn:perc} is computed using~\cite{mitchell2005toolbox, bansal2017hamilton}. Fig.~\ref{fig:hjb}a shows the perceivability domain for the single integrator. The optimal controller~\eqref{eqn:hjb_u} is used to drive both vehicles from the same initial condition, and the resulting trajectories are plotted in~Fig.~\ref{fig:hjb}b. Due to the ocean currents, both vehicles need to do loops to acquire clarity. Clarity is plotted against time in Fig.~\ref{fig:hjb}c, and we see that the single integrator is able to reach $q(T) \geq q^*$, but Dubins boat is not. Despite having the same sensing capabilities, the perceivability is different due to different actuation capabilities. 

Computing the 10-second perceivability domain took 450~seconds on a Macbpoook Pro (i9, 2.3GHz, 16GB). While prohibitively slow for online applications, $V$ can be precomputed offline. Future work will explore fast trajectory design techniques, and consider safety or energy constraints using tools akin to RIG~\cite{hollinger2013sampling}, or CBFs, as demonstrated next.

\subsection{CBF-based Trajectory Generation}
If a CBF~\cite{ames2016control} can be found for a system, one does not need to solve~\eqref{eqn:HJB}. Consider a 6D planar quadrotor~\cite{agrawal2022safe}:
\eqnN{
\ddot x_1 = u_1 \sin{x_3}/ m, \; \ddot x_2 = u_1 \cos{x_3}/ m - g,  \; \ddot x_3 = u_2 / J
}
where $x_1, x_2$ is the position of the quadrotor in the vertical plane, and $x_3$ is the pitch angle. $m, g, J$ are the mass, acceleration due to gravity, and moment of inertia of the quadrotor. The quadrotor is attempting a precision landing, using onboard sensors to determine the landing spot. To prevent the quad from descending too quickly, we impose the constraint $x_2 \geq 2 \sigma$, where $\sigma$ is the std of the estimated landing site. Using $\sigma^2 = 1/q - 1$, this reads
\eqnN{
\Scal =  \{ [x^T, q]^T : h(x, q) = q - 4/(4 + x_2^2)  \geq 0\}
}
where $h$ is a CBF of relative degree 2 for the planar quadrotor system. Fig.~\ref{fig:landing} shows the trajectories with and without the CBF-QP controller~\cite{xiao2019control}. The CBF controller slows down to collect sufficient quality of information before landing. We attempted to solve the same problem using~\eqref{eqn:HJB}, but the since the system is 7D (6D for $x$, 1D for clarity), it took 480~s to compute the 0.05~s perceivability domain on a coarse grid. Using a slightly finer grid required over 60GB of RAM, and MATLAB crashed. In contrast, the CBF-QP controller computes safe control inputs in under a 1~ms. 

\begin{figure}
    \centering
    \includegraphics[width=0.8\linewidth]{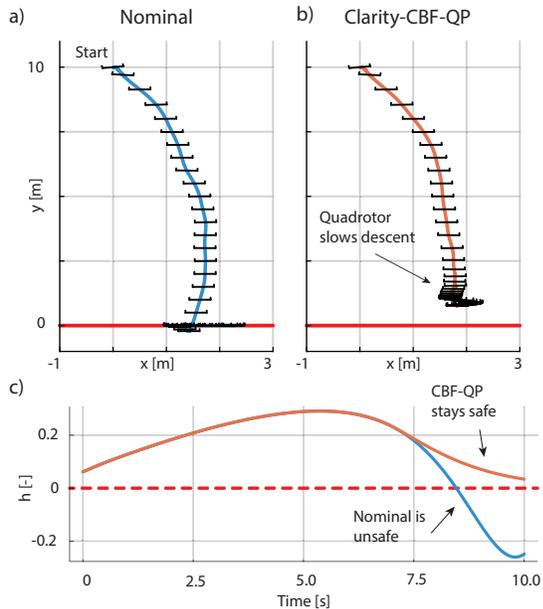}
    \caption{Precision landing of a planar quadrotor. (a) In the nominal controller, the quad descends rapidly and misses the target. (b) Using the clarity based CBF-QP controller, the quad descends slowly. (c) Plot of $h$ against $t$, showing the CBF-QP keeps the system safe.}
    \label{fig:landing}
    \vspace{-3mm}
\end{figure}

\section{Conclusion}

In this paper we have introduced the concepts of clarity and perceivability. While clarity is simply a redefinition of entropy, we show an interesting connection between coverage control and information theory through clarity. Furthermore the algebraic simplicity of expressions involving clarity make it intuitive for control design. We remark that although clarity dynamics of the Kalman Filter  are nonlinear, closed form solutions exist. We defined perceivability of an environment as the ability for a sensing and control system to collect information, measured in terms of the clarity that can be gained. This allows us to interpret and quantify the information gathering capabilities of a system in terms of reachability analysis, a well-established field with a large set of mathematical and software tools. In the future, we hope to develop computationally-efficient tools to analyze perceivability. 

\appendices


{
\setstretch{0.925}
\bibliographystyle{ieeetr}
\bibliography{biblio}
}

\clearpage

\end{document}